\definecolor{blue}{HTML}{1F77B4}
\definecolor{orange}{HTML}{FF7F0E}
\definecolor{green}{HTML}{2CA02C}
\pgfplotsset{compat=1.14}
\begin{document}
\title{On Generalization for Generative Flow Networks}
% \titlerunning{Abbreviated paper title}
% If the paper title is too long for the running head, you can set@&&&&&&&&&&&&&&&&&&&&&&&&&&&&&&&&&&&&&&&&&&&&&&&&&&&&&&&&&&&&&&&&&&&&&&&&&&&&&&&&&&é
% an abbreviated paper title here

\author{\textbf{Anas Krichel} \hspace{0.3cm}
 \textbf{Nikolay Malkin} \hspace{0.3cm}
 \textbf{Salem Lahlou} \hspace{0.3cm}
 \textbf{Yoshua Bengio}}

\institute{
 Mila, Universit\'e de Montr\'eal \\
 \email{
 anas.krichel2@gmail.com,
 nikolay.malkin@mila.quebec,
 lahlosal@mila.quebec
 yoshua.bengio@mila.quebec
 }
}
\maketitle % typeset the header of the contribution
\begin{abstract}
Generative Flow Networks (GFlowNets) \cite{bengio2021flow,bengio2023foundations} have emerged as an innovative learning paradigm designed to address the challenge of sampling from an unnormalized probability distribution, called the reward function. This framework learns a policy on a constructed graph, which enables sampling from an approximation of the target probability distribution through successive steps of sampling from the learned policy. To achieve this, GFlowNets can be trained with various objectives \cite{malkin2022trajectory,bengio2023foundations,madan2023learning}, each of which can lead to the model's ultimate goal. The aspirational strength of GFlowNets lies in their potential to discern intricate patterns within the reward function and their capacity to generalize effectively to novel, unseen parts of the reward function. This paper attempts to formalize generalization in the context of GFlowNets, to link generalization with stability, and also to design experiments that assess the capacity of these models to uncover unseen parts of the reward function. The experiments will focus on length generalization -- generalization to states that can be constructed only by longer trajectories than those seen in training.

\keywords{Generative Flow Networks, Generalization, Stability}
\end{abstract}

\section{Introduction}

Consider a reward function $R:\mathcal{X}\to\mathbb{R}^+$, where $\mathcal{X}$ is a finite set of objects that can be sequentially constructed, such as sets or graphs. $R$ assigns a positive real reward to each object in $\mathcal{X}$, and its normalization $x\mapsto\frac{R(x)}{Z}$ is the probability mass function of a distribution over $\mathcal{X}$. The objective of generative flow networks, or GFlowNets \cite{bengio2021flow,bengio2023foundations}, is to approximate this distribution by a composition of stochastic steps, where each step is follows a parametric conditional distribution that can be tractably sampled.

GFlowNets are amortized compound density estimators: they do not directly model $\frac{R(x)}{Z}$, but instead model sequences of actions which, through incremental modifications, construct elements of $\mathcal{X}$. The generative process is thus a policy conditioned on `intermediate states' -- entities not necessarily belonging to $\mathcal{X}$, but to a larger state space that may also contain `incomplete' objects. This policy models the transition probability from one intermediate state to the next (or to a terminal state, indicating that the construction is complete); equation \ref{eq:gfn_goal} formalizes these concepts. The (deterministic and episodic) Markov decision process in which the policy acts can be represented as a directed acyclic graph in which the nodes with no successors (terminal states) are identified with $\mathcal{X}$ and all other nodes are intermediate states.

The objects involved in GFlowNet training -- the state space, the fixed reward function, and the learned policy -- are the same as those in reinforcement learning \cite{sutton1998reinforcement,sutton1988learning,bertsekas1996neurodynamic}; however, the outcome of training is a policy that samples from the target distribution, rather than one that maximizes the expected terminal reward.\footnote{The precise connections between GFlowNets and entropy-regularized reinforcement learning were proved by \cite{tiapkin2024generative,deleu2024discrete}.} Once a policy has been trained to minimize a GFlowNet objective, new samples from the target distribution can be drawn by following the policy until a terminal state is reached.
%The forward policy effectively acts as a navigator, guiding a walker on this graph toward leaf nodes. 
Furthermore, training the policy as a parametrized model (such as a neural network) that takes a representation of an intermediate state as input and outputs a probability distribution over actions may lead to generalization to states not encountered during training.

GFlowNets can also be contrasted with non-amortized sampling algorithms, such as Markov Chain Monte Carlo (MCMC) methods. MCMC algorithms generate random samples by simulating a Markov process over the sample space $\mathcal{X}$ (or possibly some augmented space) whose stationary distribution is known to be the target distribution. With sufficient iterations, MCMC converges to the distribution of interest. However, MCMC exhibits limitations, particularly in scenarios where the modes of the reward function are separated by regions with low mass: the likelihood that a MCMC walker moves from one mode to another is exponentially small in the number of steps between the modes. In such instances, MCMC samples may become entangled in one mode, resulting in a lack of diversity in the generated objects. Moreover, MCMC techniques for discrete objects with combinatorial constraints are poorly developed relative to their continuous counterparts.

The challenge of mode collapse manifests in GFlowNets as well. A GFlowNet may become fixated on a particular mode during its training phase, motivating various off-policy exploration techniques to enhance discovery of modes during training.

\paragraph{Our contributions.}

This study attempts to rigorously formalize the concept of generalization within the framework of GFlowNets trained with the Trajectory Balance loss (Section \ref{sec:tb}). A key aspect of this research is the establishment, under strong assumptions, of probabilistic bounds that guarantee the generalization capabilities of GFlowNets. In addition, we introduce a definition of stability for GFlowNets, motivated by the link between stability and generalization \cite{bousquet2002stability}. This is further augmented by a formal proof of the stability characteristics of GFlowNets.

Moving beyond theoretical results, our work also includes an empirical validation to assess generalization. This is achieved by intentionally hiding various random segments of the reward function in a given environment, followed by an evaluation of the GFlowNet's ability to reconstruct these concealed components. A notable observation from this empirical investigation is that policies derived from the Detailed Balance loss exhibit a marginally superior capacity for generalization (reconstruction of the hidden parts of the reward) compared to those derived from Trajectory Balance loss. 

To provide a theoretical explanation for this observation, we propose a hypothesis that could pave the way to future experiments aimed at verifying its validity. This hypothesis, if substantiated, has the potential to significantly enhance our understanding of the mechanisms underpinning generalization in GFlowNets and could inform the development of more robust training methodologies.

\section{Preliminaries and definitions}

In alignment with the foundational work presented in \cite{bengio2023foundations}, this section aims to establish a formal framework for the environment in which the policy under consideration operates. We are given a Directed Acyclic Graph (DAG), denoted as $G = (\mathcal{S}, \mathcal{A})$, where $\mathcal{S}$ embodies the state space and $\mathcal{A}$ is a subset of $\mathcal{S}\times\mathcal{S}$ represents the action space. Within this context, a state in $\mathcal{S}$ is designated as the source $s_0$ if it lacks incoming edges. Conversely, a state is identified as a sink state $s_f$ if it possesses no outgoing edges. If both these types of states are present and unique, the graph $G$ is then classified as a pointed DAG. Additionally, all states that maintain a connection to the sink state are referred to as terminal states. We denote by $\mathcal{X}$ the set of terminal states.

A trajectory within the graph $G$ is defined as $\tau = (s_0 \rightarrow s_1 \rightarrow \dots \rightarrow s_{n-1} \rightarrow s_n)$, where $s_0$ is the source state and $s_n$ is a terminal state, while for each $i$ in the set $\{0, ..., n-1\}$, the transition $s_i \rightarrow s_{i+1}$ is an element of the action space $\mathcal{A}$. The set of all potential trajectories in this framework is denoted as $\mathcal{T}$. In this work, we assume, without loss of generality\footnote{In the general case, we can always add virtual nodes $\bar{s}$ between a terminal node $s$ and $s_f$ to satisfy this constraint}, that terminal states have exactly one outgoing edge: $\forall s \in \mathcal{X}, s\rightarrow s_f \in \mathcal{A}$. 

A positive function $\hat{P}_F(\cdot| s)$ that satisfies: $\forall$ $s \in S \setminus \{s_f\}$
$\sum_{s' \in \text{Child}(s)} \hat{P}_F(s' | s) = 1$ is called a forward transition policy, where $\text{Child}(s)$ is the children set of node $s$. Similarly a backward transition policy $\hat{P}_B(\cdot| s)$ is any positive function that verifies $\forall$ $s \in S \setminus \{s_0\}$
$\sum_{s' \in \text{Par}(s)} \hat{P}_B(s' | s) = 1$, where $\text{Par}(s)$ is the parent set of node $s$.

\subsection{Definition}

Given a DAG $G = (S, \mathcal{A})$, a reward function $R$ over $\mathcal{X}$ (the set of terminal states), the GFlowNet goal is to construct a forward transition policy $P_F\left(s_{i+1} \mid s_i ; \theta\right)$ such that:
\begin{equation}
 \forall x \in \mathcal{X} : \sum_{\tau \in \mathcal{T}: x \in \tau} \prod_{i=0}^{n-1} P_F\left(s_{i+1} \mid s_i ; \theta\right) = \frac{R(x)}{Z},
 \label{eq:gfn_goal}
\end{equation}
where $Z = \sum_{x \in \mathcal{X}} R(x)$. Meaning, that with such a learned $\hat{P}_F$, it suffices to sequentially sample following the policy, starting from $s_0$, until reaching a terminal state, to effectively obtain samples from the target probability distribution. Equation (\ref{eq:gfn_goal}) can not provide directly a useful loss to learn $\hat{P}_F$, since the sum is intractable and Z is unknown. The Trajectory balance loss \cite{malkin2022trajectory} for example provides a necessary and sufficient condition for a learned $\hat{P}_F$ to satisfy (\ref{eq:gfn_goal}), from which a loss function, optimizable with gradient descent, can be defined. However, it requires parametrizing a backward probability transition function $\hat{P}_B$, and learning an estimate $\hat{Z}$ (also denoted $Z_\theta$) of the partition function $Z$. 
In the GFlowNet's literature, a notion of ``flow", referring to positive quantities associated with edges or nodes, is used in other training losses. In this work, we focus on the Trajectory Balance loss, which does not require defining flow estimates.

\subsection{Trajectory Balance loss}
\label{sec:tb}

Let $G = (S, \mathcal{A})$ be a DAG, given a reward function $R$ over $\mathcal{X}$. For any state $s$ let $P_F(\cdot \mid s; \theta)$ and $P_B(\cdot \mid s'; \theta)$ be a forward and backward policy parametrized by $\theta$, respectively.
The trajectory loss for a given path $\tau = (s_0 \rightarrow s_1 \rightarrow \ldots \rightarrow s_n = x)$ is given by:
\begin{equation}
\mathcal{L}_{\rm TB}(\tau,P_{F}^{\theta}, P_{B}^{\theta}, Z_{\theta}, R) = \mathcal{L}_{\rm TB}(\tau, \theta) := \left( \log \frac{Z_{\theta} \prod_{t=1}^{n} P_F(s_t \mid s_{t-1}; \theta)}{R(x) \prod_{t=1}^{n} P_B(s_{t-1} \mid s_t; \theta)} \right)^2.
\end{equation}

As previously mentioned, when the value of the loss is minimal (zero), then $P_F^\theta$ satisfies equation (\ref{eq:gfn_goal}). The policy $P_B$ can either be fixed (e.g., $P_B(\cdot \mid s')$ can be the uniform distribution over the parents of $s'$), or learned alongside $P_F$ and $Z_\theta$. This is proven in \cite{malkin2022trajectory}.
\section{Statistical definition of generalization for GFlowNet}
Learning theory (LT) \cite{vapnik1995nature,bousquet2003introduction,valiant1984theory,shalevshwartz2014understanding} is a framework that formalize machine learning concepts and derive mathematical statements about learning algorithms. In learning theory, we take a step back and consider the theoretical properties of learning algorithms prior to the acquisition of empirical data. This allows to provide statistical guarantees for learning algorithms and may inform the design of new algorithms with desired theoretical properties. The goal of learning theory is to provide a formal framework to understand the process of learning and hopefully to develop algorithms that can learn from data with provable guarantees. However, research on LT has been mostly geared towards supervised learning. GFlowNets represent a new learning paradigm for which we are going to define relevant concepts.

We focus in the following definition, and on the remainder of the paper, on GFlowNets trained with TB using SGD (stochastic gradient descent), with one trajectory sampled on-policy (i.e., by following the currently learned $P_F^\theta$) per iteration.
\begin{definition}[A definition of generalization]
Let $R$\footnote{$R(\tau_i):=R(x_i)$ where $x_i$ is the terminal state of $\tau_i$.}
 be a reward function, we represent a GFlowNet by $(Z_{\theta}, P_B, P_F)$. Let $\tau_i \sim P_F(\cdot \mid \cdot;\theta_{i-1})$\footnote{$\tau \sim P_F(\cdot \mid \cdot;\theta)$ means $\tau \sim P$ where $\forall \tau = (s_1, \dots, s_n) \in \mathcal{T}, P(\tau) := \prod_{t=1}^{n-1} \hat{P}_F(s_{t+1} | s_t).
$}, $\theta_i = \theta_{i-1} - \gamma \nabla_{\theta}\mathcal{L}_{\rm TB}(\tau_i,\theta_{i-1})$ and $\theta_0$ is a fixed vector. 
Let $\mathcal{R}_n$ be defined by:

$$\mathcal{R}_n:= \frac{1}{n} \sum_{i=1}^{n} \mathcal{L}_{\rm TB}(\tau_i,R(\tau_i), \theta_n). 
$$
($\theta_n$ is the parameters of the GFlowNet, i.e., those of $P_F$, $Z$, and $P_B$ if it is learned).
If there exists $\pi$ a test distribution over $\mathcal{T}\times\mathbb{R^+}$, and a function $f$ \footnote{$f$ may depend on many other quantities than $\mathcal{R}_n,n$ and $\pi$.} such that \footnote{In general R is a deterministic function of $\tau$, the expectation could be taken over $\tau$ only. The more general case is to consider that R is random too. Proposition \ref{prop:bound_for_regression_assumption} treats R as a function of $\tau$.}
with high probability we have: 
$$ \displaystyle \mathop{\mathbb{E}}_{(\tau,R) \sim \pi}[ \mathcal{L}_{\rm TB}(\tau,R,\theta)] \le f(\mathcal{R}_n,n,\pi).
$$
then we say that the GFlowNet $(Z_{\theta}, P_B, P_F)$ $(f,\pi)$-generalizes.
More generally any inequality that bounds the expected loss over an unseen distribution of the TB loss is a form of measurement of generalization. Theoretically, any distribution over $\mathcal{T}$ that gives non-zero probabilities to all trajectories would serve as a test distribution. The bound is useful if $f$ decreases with both the increase of $n$ and decrease of $\mathcal{R}_n$.

\end{definition}

\begin{proposition}[Bound with i.i.d. assumption]
\label{prop:bound_for_regression_assumption}
We note for all $\tau$ in $\mathcal{T}$,
$\mathcal{L}_{\rm TB}(R, f_{\theta}(\tau)) := \mathcal{L}_{\rm TB}(\tau,P_{F}^{\theta}, P_{B}^{\theta}, Z_{\theta}, R) $
when $P_F$ and $P_B$ vary in a fixed hypothesis set (typically neural networks): $f_{\theta}(\tau) = \frac{Z_{\theta}\prod_{i=1}^{n}P_F(s_{i+1}|s_i;\theta)}{\prod_{i=1}^{n}P_B(s_{i}|s_{i-1};\theta)}$ describes another hypothesis set that we note $\mathcal{H}$. Let $\mathcal{G} = \{ (\tau, R)\mapsto \mathcal{L}_{\rm TB}(R,f_{\theta}(\tau)), f_{\theta} \in \mathcal{H} \} $. We suppose that:
$\exists M >0$, $ \forall \tau \in \mathcal{T}$, $\forall R, $ $ \forall f_{\theta} \in \mathcal{H}$, $\mathcal{L}_{\rm TB}(R, f_{\theta}(\tau))<M.$ Let $(\tau_1,R_1),...,(\tau_n,R_n) \overset{\mathrm{iid}}{\sim} \pi$ where $\pi$ is any distribution over $\mathcal{T}\times \mathbb{R}^+$
Then with probability 1-$\eta$ we have:
$$ \displaystyle \mathop{\mathbb{E}}_{(\tau,R) \sim \pi}[ \mathcal{L}_{\rm TB}(R,f_{\theta}(\tau)] \le1/n\sum_{i=1}^{n}\mathcal{L}_{\rm TB}(R_i,f_{\theta}(\tau_i)) + 2\mathcal{R(G)} + M\sqrt{\frac{\log(1/\eta)}{2n}},$$
$$ \displaystyle \mathop{\mathbb{E}}_{(\tau,R) \sim \pi}[ \mathcal{L}_{\rm TB}(R,f_{\theta}(\tau)] \le1/n\sum_{i=1}^{n}\mathcal{L}_{\rm TB}(R_i,f_{\theta}(\tau_i)) + M\sqrt{\frac{2d \log(m/d)}{n}} + M\sqrt{\frac{\log(1/\eta)}{2n}}, $$
where 
$\mathcal{R(G)}$ is the Rademacher complexity of $\mathcal{G}$ and $d$ is the pseudodimension of $\mathcal{G}$.

\end{proposition}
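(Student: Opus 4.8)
The plan is to recognize this statement as a \emph{uniform convergence} bound over the loss class $\mathcal{G}$, obtainable by applying the standard machinery of Rademacher complexity together with the bounded-differences inequality. The essential observation is that, once the pairs $z_i = (\tau_i, R_i)$ are assumed i.i.d.\ from $\pi$ and the loss is uniformly bounded by $M$, the quantities $g(\tau_i, R_i) = \mathcal{L}_{\rm TB}(R_i, f_\theta(\tau_i))$ for $g \in \mathcal{G}$ behave exactly like the bounded loss functions studied in classical learning theory. The bound therefore holds \emph{simultaneously} over all $f_\theta \in \mathcal{H}$, and in particular for whatever hypothesis the training procedure eventually outputs, which is why the complexity of the entire class $\mathcal{G}$ (rather than of a single $f_\theta$) enters the right-hand side.

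For the first inequality I would proceed in two steps. First, define the one-sided uniform deviation $\Phi(z_1,\dots,z_n) = \sup_{g \in \mathcal{G}}\bigl( \mathbb{E}_\pi[g] - \tfrac{1}{n}\sum_{i=1}^n g(z_i) \bigr)$. Since every $g$ takes values in $[0,M)$, replacing a single argument $z_i$ changes $\Phi$ by at most $M/n$, so McDiarmid's bounded-differences inequality gives, with probability at least $1-\eta$, that $\Phi \le \mathbb{E}[\Phi] + M\sqrt{\log(1/\eta)/(2n)}$; this produces the last additive term. Second, a symmetrization argument (a ghost sample together with Rademacher signs $\sigma_i$) bounds $\mathbb{E}[\Phi] \le 2\mathcal{R}(\mathcal{G})$. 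Combining the two and rearranging yields the Rademacher form of the bound.

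For the second inequality I would replace the Rademacher complexity by a combinatorial complexity measure. The pseudodimension $d$ of $\mathcal{G}$ controls its covering numbers through the real-valued analogue of the Sauer–Shelah lemma, bounding the size of an $\varepsilon$-cover by roughly $(n/d)^{d}$; Massart's finite-class lemma (equivalently, Dudley's entropy integral over these covers) then bounds $\mathcal{R}(\mathcal{G})$ by a term of order $M\sqrt{2d\log(n/d)/n}$. Substituting this into the first inequality gives the stated pseudodimension bound, where I read the $m$ in the statement as the sample size $n$.

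The main obstacle is not mathematical depth — both inequalities are essentially off-the-shelf once the hypotheses are granted — but the legitimacy of those hypotheses in the GFlowNet setting. I would need to confirm that (i) the TB loss is genuinely uniformly bounded, which is delicate because it contains logarithms of probability ratios and of $Z_\theta/R(x)$ that can blow up near the boundary of the simplex or for extreme rewards, so the constant $M$ implicitly restricts $\mathcal{H}$; and (ii) the i.i.d.\ assumption, which is the crux: in the on-policy SGD training described just before the proposition, the trajectories are drawn sequentially from the evolving policy $P_F(\cdot\mid\cdot;\theta_{i-1})$ and are hence neither independent nor identically distributed. The proposition should therefore be read as the idealized i.i.d.\ baseline against which the genuinely sequential, dependent training dynamics must later be compared.
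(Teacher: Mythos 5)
Your proposal is correct and follows exactly the route the paper intends: the paper's proof is simply a citation to the standard Rademacher-complexity generalization bound for bounded loss classes (Mohri et al., p.~270), and your sketch (McDiarmid plus symmetrization for the first inequality, then the pseudodimension/covering-number bound on $\mathcal{R}(\mathcal{G})$ for the second) is precisely the standard proof of that cited result, including the correct reading of $m$ as the sample size $n$. Your caveats about the boundedness of the TB loss and the unrealistic i.i.d.\ assumption are also acknowledged by the paper itself in the remark immediately following the proposition.
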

\begin{proof}[It is a direct application of the Rademacher properties for bounded losses see {\cite[p.~270]{mohri2012foundations} or \cite[p.~88]{bach}}]
\end{proof}
\begin{remark}
Assuming that the training trajectories are sampled independently from a fixed distribution is unrealistic. GFlowNet is not trained that way. Typically, it learns from trajectories sampled from a perturbed version of its actual policy. That motivates the next proposition. 
\end{remark}

\begin{proposition}[Beyond i.i.d. assumption]
Let $P_F^*$ be a global minimum of the trajectory balance loss $\mathcal{L}_{\rm TB}$, $P_F(\cdot|\cdot;\theta_n)$ the learned forward policy at the time n. $M_1$ and $M_2$ are respectively a lower and upper bound for the $\mathcal{L}_{\rm TB}$ loss ($M_1$ can be chosen to be 0). We assume that the constant Z is known. 

$$ \displaystyle \mathop{\mathbb{E}}_{\tau \sim P^*}[\mathcal{L}_{\rm TB}(\tau)] \le \frac{1}{\sqrt{2}}(M_1 + M_2)\displaystyle \mathop{\mathbb{E}}_{\tau \sim P_F(\theta_n)}[\mathcal{L}_{\rm TB}(\tau)]^{\frac{1}{4}} + \displaystyle \mathop{\mathbb{E}}_{\tau \sim P_F(\theta_n)}[\mathcal{L}_{\rm TB}(\tau)].$$
\end{proposition}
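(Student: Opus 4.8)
The plan is to reduce everything to a change-of-measure estimate between the two trajectory distributions $P := P_F(\cdot\mid\cdot;\theta_n)$ and $P^* := P_F^*$, controlled by their total variation distance, and then to convert that distance into a fourth power of the on-policy expected loss via Pinsker's inequality. Throughout I read $\mathcal{L}_{\rm TB}(\tau)$ as the loss of the \emph{current} model (parameters $\theta_n$) evaluated on a trajectory $\tau$, with only the sampling law ($P^*$ versus $P$) changing between the two sides of the claim.

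First I would exploit the assumption that $Z$ is known (and that the backward policy $P_B$ is shared and fixed between the two policies) to rewrite the integrand. Since $P^*$ is a global minimizer of $\mathcal{L}_{\rm TB}$, the trajectory balance identity gives $P^*(\tau)=\frac{R(x)}{Z}\prod_{t} P_B(s_{t-1}\mid s_t)$, and substituting this into the definition of the loss cancels the reward and backward factors, yielding the clean identity $\mathcal{L}_{\rm TB}(\tau)=\bigl(\log\tfrac{P(\tau)}{P^*(\tau)}\bigr)^2$ for every $\tau$. Writing $r(\tau):=\log\tfrac{P(\tau)}{P^*(\tau)}$, both expectations in the claim become moments of $r$: the on-policy training term is $\mathbb{E}_{P}[r^2]$ and the target term is $\mathbb{E}_{P^*}[r^2]$.

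Next I would split $\mathbb{E}_{P^*}[r^2]=\mathbb{E}_{P}[r^2]+\sum_\tau\bigl(P^*(\tau)-P(\tau)\bigr)r(\tau)^2$. Because $r^2=\mathcal{L}_{\rm TB}\in[M_1,M_2]$ is bounded, the second sum is at most $(M_2-M_1)\,d_{\rm TV}(P^*,P)$ (split the signed measure $P^*-P$ into its positive and negative parts and apply the upper bound $M_2$ to one and the lower bound $M_1$ to the other; since $M_1\ge 0$ this is in particular bounded by $(M_1+M_2)\,d_{\rm TV}(P^*,P)$, matching the stated constant, with equality when $M_1=0$). It then remains to control the total variation distance by the on-policy loss. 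Here I would invoke Pinsker in the direction $d_{\rm TV}(P^*,P)\le\sqrt{\tfrac12 D_{\rm KL}(P\|P^*)}$ and note that $D_{\rm KL}(P\|P^*)=\mathbb{E}_{P}[r]\le\sqrt{\mathbb{E}_{P}[r^2]}$ by Jensen (equivalently Cauchy--Schwarz against the constant $1$, using $\mathbb{E}_P[r]=D_{\rm KL}(P\|P^*)\ge 0$). Chaining these gives $d_{\rm TV}(P^*,P)\le\tfrac{1}{\sqrt2}\,\mathbb{E}_{P}[\mathcal{L}_{\rm TB}]^{1/4}$, and substituting above yields exactly the claimed inequality.

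The Pinsker and Jensen steps are routine; the real content — and the step I expect to need the most care — is the first one, namely justifying that under the stated hypotheses the TB loss evaluated at $\theta_n$ is genuinely the squared log-likelihood ratio $r^2$. This is what lets $\mathbb{E}_P[\mathcal{L}_{\rm TB}]$ simultaneously serve as a second moment of $r$ and as a surrogate for $D_{\rm KL}(P\|P^*)$, and it is precisely where the assumptions that $Z$ is known and that $P_B$ is not re-optimized between $P^*$ and $\theta_n$ are essential: were $P_B$ learned and different for the two policies, the reward and backward factors would fail to cancel and the argument would require an extra term accounting for the backward-policy mismatch.
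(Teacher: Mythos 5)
Your proof is correct and follows essentially the same route as the paper's: a change-of-measure bound in total variation (which you prove inline rather than citing as a lemma), followed by Pinsker's inequality and Jensen/Cauchy--Schwarz to control $\|P^*-P_F(\theta_n)\|_{TV}$ by $\tfrac{1}{\sqrt2}\mathbb{E}_{P_F(\theta_n)}[\mathcal{L}_{\rm TB}]^{1/4}$. Your explicit derivation of the identity $\mathcal{L}_{\rm TB}(\tau)=\bigl(\log\tfrac{P_F(\theta_n)(\tau)}{P^*(\tau)}\bigr)^2$ from the assumptions that $Z$ is known and $P_B$ is shared is a welcome clarification of a step the paper's Jensen argument uses only implicitly.
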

\begin{proof}
We use Lemma \ref{lma:tv_bound} (see the Appendix) with $P = P^* $,
$Q = P_F(\theta_n)$ we get:

$$\displaystyle \mathop{\mathbb{E}}_{\tau \sim P*}[\mathcal{L}_{\rm TB}(\tau, \theta_{n})] \le (M_1+M_2)||P^* - P_F(\theta_n) ||_{TV} + \displaystyle \mathop{\mathbb{E}}_{\tau \sim P_F(\theta_n)}[\mathcal{L}_{\rm TB}(\tau)]. $$
With the Jensen Inequality we have:
$$\displaystyle \mathop{\mathbb{E}}_{\tau \sim P_F(\theta_n)}[\mathcal{L}_{\rm TB}(\tau)] \ge \displaystyle \mathop{\mathbb{E}}_{\tau \sim P_F(\theta_n)}[\log(\frac{P_F(\theta_n)(\tau)}{P^*(\tau)})]^2.$$
Notice:
$$\displaystyle \mathop{\mathbb{E}}_{\tau \sim P_F(\theta_n)}[\log(\frac{P_F(\theta_n)(\tau)}{P^*(\tau)})] = KL(P_F(\theta_n)||P^*)
.$$

Then: 
\begin{equation}
KL(P_F(\theta_n)||P^*) \leq
\displaystyle \mathop{\mathbb{E}}_{\tau \sim P_F(\theta_n)}[\mathcal{L}_{\rm TB}(\tau)]^{1/2}.
\label{eq:jensen}
\end{equation}

Additionally, by Pinsker's inequality, we have the following: 
\begin{equation} \sqrt{\frac{1}{2}KL(P_F(\theta_n) || P^*)} \ge || P_F(\theta_n) - P^* ||_{TV}.
\label{eq:pinsker}
\end{equation}
From inequality (\ref{eq:jensen}) and (\ref{eq:pinsker}):
$$ || P_F(\theta_n) - P^* ||_{TV} \le \frac{1}{\sqrt{2}}\displaystyle \mathop{\mathbb{E}}_{\tau \sim P_F(\theta_n)}[\mathcal{L}_{\rm TB}(\tau)]^{\frac{1}{4}}.$$

And finally: 
$$ \displaystyle \mathop{\mathbb{E}}_{\tau \sim P^*}[\mathcal{L}_{\rm TB}(\tau)] \le \frac{1}{\sqrt{2}}(M_1 + M_2)\displaystyle \mathop{\mathbb{E}}_{\tau \sim P_F(\theta_n)}[\mathcal{L}_{\rm TB}(\tau)]^{\frac{1}{4}} + \displaystyle \mathop{\mathbb{E}}_{\tau \sim P_F(\theta_n)}[\mathcal{L}_{\rm TB}(\tau)].$$

\end{proof}

\section{Stability}
Measures of complexity are widely used in learning theory to assess statistical guarantees of learning algorithms \cite{vapnik1995nature}. 
%Nonetheless, the intricacies of a learning algorithm extend beyond this singular measure, and incorporating additional parameters can yield more specific guarantees. One such parameter is Stability, 
Stability, on the other hand, holds a significant correlation with the concept of generalization, a relationship that we will elaborate on after a formal introduction of stability. Initially, stability is defined within the context of supervised learning algorithms\cite{bousquet2002stability,kearns1999algorithmic,wu2006learning,bendavid2006sober}, we recall its definition in the appendix, following \cite{bousquet2002stability,feldman2018generalization}. Inspired by that, we will extend the notion of stability to GFlowNets framework. 
%The ensuing definitions adhere to the framework established in prior works .
%GFlowNets definition of stability is inspired from the the definition of stability for supervised learning (see Appendix)

\begin{definition}[Stability for GFlowNets]
Consider a DAG $G = (S, \mathcal{A})$. Let $R_1$ and $R_2$ two functions defined on the target space $\mathcal{X}$ (a subset of $S$) and take values in $\mathbb{R^+}$. 
Let $P_1$ and $P_2$ be respectively the distributions over trajectories induced \footnote{A GFlowNet can be seen as a mapping from the space of all reward functions 
${R:\mathcal{X} \longmapsto \mathbb{R^+}}$ to $\bigtriangleup(\mathcal{T})$ (set of distributions over trajectories)} by the forward policy learned by a GFlowNet trained with Trajectory Balance loss on $R_1$ and $R_2$.

We say that a GFlowNet is $\beta$-stable if:
for all reward functions $R_1$ and $R_2$ such that: $\exists \epsilon>0
, \forall x \in \mathcal{X} : |R_1(x) - R_2(x) |< \epsilon.$

then:
$$\forall \tau \in T, |P_1(\tau) - P_2(\tau) |< \beta\epsilon.$$
\end{definition}

Stability measures how much a GFlowNet is sensitive to small changes in the reward function. If a GFlowNet that was trained on $R_1$, drastically changes its policy to model a slightly perturbed version of $R_1$, it indicates that it is highly sensitive to minor changes in the reward function. This sensitivity suggests that the GFlowNet is unable to capture the underlying structure of the reward function.

\subsection{Result concerning GFlowNets} \label{sec:sampling}
\begin{proposition}\label{prop:stability}
Let $R_1$ and $R_2$ two reward functions defined on $\mathcal{X}$ and take values in $\mathbb{R^+}$ such that: 
\begin{align*}
 &\sum_{x \in \mathcal{X}}R_1(x) = \sum_{x \in \mathcal{X}}R_2(x), \\
 &\forall x \in \mathcal{X}, |R_1(x) - R_2(x) | < \epsilon.
\end{align*}

Let $P_1$ and $P_2$ be the learned probability distributions over trajectories by a GFlowNet that reach a global minimum of the Trajectory Balance loss and fix the backward transition probability to a uniform distribution. Then we have:
$$\exists \beta \in [0,1 ] , \ \forall \tau \in \mathcal{T}, |P_1(\tau) - P_2(\tau) |< \beta\epsilon. $$
\end{proposition}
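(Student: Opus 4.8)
The plan is to exploit the fact that a global minimizer of $\mathcal{L}_{\rm TB}$ pins down the trajectory distribution in closed form, so that the difference $P_1-P_2$ can be read off directly from $R_1-R_2$. First I would use minimality: since the loss is a square, attaining a global minimum of value $0$ means that for every trajectory $\tau=(s_0\to\cdots\to s_n=x)$ the argument of the logarithm equals one, i.e. $Z_\theta\prod_{t=1}^n P_F(s_t\mid s_{t-1};\theta)=R(x)\prod_{t=1}^n P_B(s_{t-1}\mid s_t)$. Summing this identity over all $\tau\in\mathcal{T}$, and using that $\sum_\tau\prod_t P_F=1$ while $\sum_{\tau\text{ ending at }x}\prod_t P_B=1$, shows that the learned normalizer satisfies $Z_\theta=\sum_x R(x)=Z$. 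Because the two rewards have equal total mass, both runs produce the same $Z_\theta=Z$; this is the single place where the assumption $\sum_x R_1(x)=\sum_x R_2(x)$ enters.

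Writing $B(\tau):=\prod_{t=1}^n P_B(s_{t-1}\mid s_t)$ for the backward weight of a trajectory, the minimality condition rearranges to $P_i(\tau)=\frac{R_i(x)}{Z}\,B(\tau)$ for $i=1,2$. The crucial observation is that $B(\tau)$ does not depend on the reward: the backward policy is fixed to the uniform distribution over parents, so $B$ is a function of the graph $G$ alone and is identical in both runs. Subtracting the two expressions then gives, for every $\tau$ ending at $x$, the clean identity $P_1(\tau)-P_2(\tau)=\frac{B(\tau)}{Z}\bigl(R_1(x)-R_2(x)\bigr)$, and taking absolute values together with the hypothesis $|R_1(x)-R_2(x)|<\epsilon$ yields $|P_1(\tau)-P_2(\tau)|<\frac{B(\tau)}{Z}\,\epsilon$.

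It remains to produce a single constant $\beta\in[0,1]$ independent of $\tau$. Since $B(\tau)$ is a product of transition probabilities it satisfies $B(\tau)\le 1$, so setting $\beta:=\frac{1}{Z}\max_{\tau\in\mathcal{T}}B(\tau)$ immediately gives $|P_1(\tau)-P_2(\tau)|<\beta\epsilon$ for all $\tau$. I expect the only delicate point to be verifying $\beta\le 1$: this requires $\max_\tau B(\tau)\le Z$, which I would argue either from the normalization of the reward (so that $Z\ge 1$) or, more tightly, from the per-terminal-state identity $\sum_{\tau\text{ ending at }x}B(\tau)=1$, which caps how large any individual backward weight can be. This final bookkeeping — forcing $\beta$ into $[0,1]$ rather than merely bounding it by $1/Z$ — is the main obstacle, since everything upstream is rigidly determined by the global-minimum identity.
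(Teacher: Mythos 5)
Your proof is correct and follows essentially the same route as the paper's: both use the zero-loss identity at the global minimum to write $P_i(\tau)=\frac{R_i(x)}{Z}B(\tau)$ with a reward-independent uniform backward weight $B(\tau)$, and both arrive at the constant $\beta=\frac{1}{Z}\max_{\tau\in\mathcal{T}}B(\tau)$. You are in fact slightly more careful than the paper, which asserts $Z_1=Z_2$ without your summation argument and never verifies that its $\beta=C/Z$ actually lies in $[0,1]$ --- the one delicate point you rightly flag.
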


In fact, the mapping from reward functions to trajectory flows that minimize the TB loss
\[\left(R(x)\right)_{x\in\mathcal{X}}\mapsto\left(Z\cdot P(\tau)\right)_{\tau\in\mathcal{T}},\quad Z:=\sum_{x\in\mathcal{X}}R(x),\]
is linear, and one can take $Z\beta$ to be its $L^1$-operator norm (equivalently, its Lipschitz constant). In the proof below, we give an explicit construction of the constant $\beta$.

\begin{proof}

Let $P_1$ and $P_2$ be two probability distributions over the set of trajectories that verify the Proposition \ref{prop:stability} assumptions. We have: 
$$P_1(\tau) =\prod_{i=0}^{n-1} \hat{P}_F^1\left(s_{i+1} \mid s_i ; \theta_1\right), P_2(\tau) =\prod_{i=0}^{n-1} \hat{P}_F^2\left(s_{i+1} \mid s_i ; \theta_2\right).$$

Following the assumptions, we choose $P_B^1$, $P_B^2$ to be uniform, formally: 
$$P_B^1(\tau)=P_B^2(\tau)=\prod_{i=1}^{n} \frac{1}{\left|\text{Par}(s_i)\right|}.$$
We have: 
$$\forall \tau \in \mathcal{T}, \mathcal{L}_{T B}\left(\tau, \hat{P}_F^1, \hat{P}_B^1, Z_1\right)= \mathcal{L}_{T B}\left(\tau, \hat{P}_F^2, \hat{P}_B^2, Z_2\right)=0.$$
We define: 
$$A_1 :=\prod_{i=1}^{n+1} \hat{P}_B^1\left(s_{i-1} \mid s_i ; \theta_1\right), A_2 :=\prod_{i=1}^{n+1} \hat{P}_B^2\left(s_{i-1} \mid s_i ; \theta_2\right).$$
We have $A_1= A_2$ from the uniform hypothesis.

Let $\tau=(s_0,...,s_n) \in \mathcal{T}$, $C=\max_{\tau \in \mathcal{T}}\prod_{i=1}^{n+1} \hat{P}_B^1\left(s_{i-1} \mid s_i ; \theta_1\right)$and $Z:=Z_1=Z_2$, then:

\begin{align*}
|P_1(\tau) &- P_2(\tau) | = |\prod_{i=0}^{n-1} \hat{P}_F^1\left(s_{i+1} \mid s_i ; \theta_1\right)-\prod_{i=0}^{n-1} \hat{P}_F^2\left(s_{i+1} \mid s_i ; \theta_2\right)|\\
&= \left|\frac{Z_1Z_2\prod_{i=0}^{n-1} \hat{P}_F^1\left(s_{i+1} \mid s_i ; \theta_1\right)}{A_1A_2R_1(s_n)R_2(s_n)}-\frac{Z_1Z_2\prod_{i=0}^{n-1} \hat{P}_F^2\left(s_{i+1} \mid s_i ; \theta_2\right)}{A_1A_2R_1(s_n)R_2(s_n)}\right|\frac{A_1A_2R_1(s_n)R_2(s_n)}{Z_1Z_2}\\
&= \left|\frac{Z_2}{A_2R_2(s_n)}-\frac{Z_2}{A_1R_1(s_n)}\right|\frac{A_1A_2R_1(s_n)R_2(s_n)}{Z_1Z_2}\\
&\le C\left|\frac{R_1(s_n)}{Z_1}-\frac{R_2(s_n)}{Z_2}\right|\\
&\le \frac{C}{Z}|R_1(s_n)-R_2(s_n)|\\
&\le \frac{C}{Z}\epsilon.
\end{align*}

\end{proof}

\section{Experiments}
We consider the Hypergid toy environment initially studied in \cite{bengio2021flow} to investigate the generalization properties of GFlowNets. Let $S = \left\{ [a, b] \mid a, b \in \mathbb{Z}, 0 \leq a, b \le N \right\}$ be the state space, we define a reward function over $S$ by: $\forall x \in S: R(x) = 10^{-3} + \sum_{i=1}^{9}\mathds{1}_{A_{i}}(x).$ Where the $A_i$ are regions where the reward is the higher. The allowed actions are for each state is either increment one of the coordinates by +1 or to terminate. This defines a DAG and is called the 2-D grid environment. All states are terminal and $\mathcal{S} = \mathcal{X}$. We train a GFlowNet on a 9-mode reward function, deliberately concealing random chosen states of the grid—meaning we don't compute the TB loss for any trajectory that contains those chosen states (see Figure~\ref{fig:hiding_states}). At each training iteration, we show in Figure~\ref{fig:jsd} the Jensen–Shannon divergence between the true distribution and the learned distribution while not learning from trajectories containing the hidden states (the learned distribution is computed exactly without any empirical approximation). The results are averaged over 7 seeds.
This process was carried out for three distinct scenarios: TB loss, DB loss, and the FL-DB parametrization \cite{pan2023better}.

\begin{figure}[!htbp] % "Here" preference, also "top", "bottom", "page" preferences
 \centering
 
 % First image on the left
 \begin{subfigure}[t]{0.48\textwidth}
 \centering
 \includegraphics[width=\textwidth]{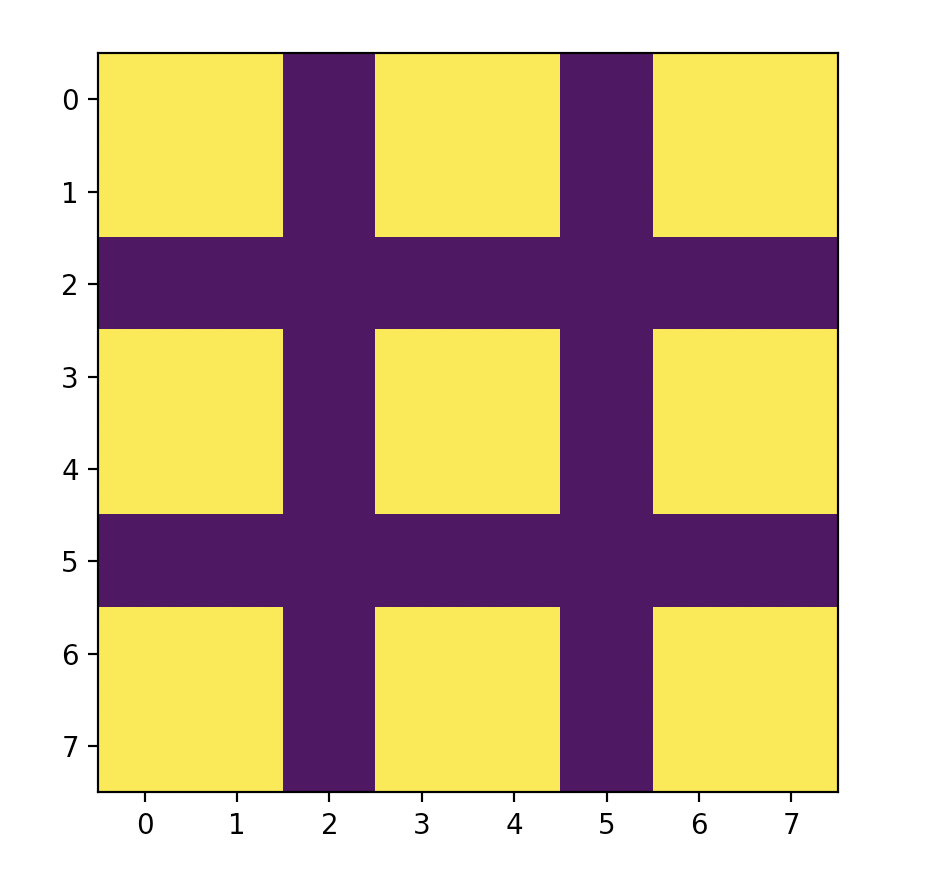}
 \caption{The reward calculated over each state of the 8 by 8 grid environment}
 \end{subfigure}
 \hfill % Space between the two figures
 % Second image on the right
 \begin{subfigure}[t]{0.48\textwidth}
 \centering
 \includegraphics[width=\textwidth]{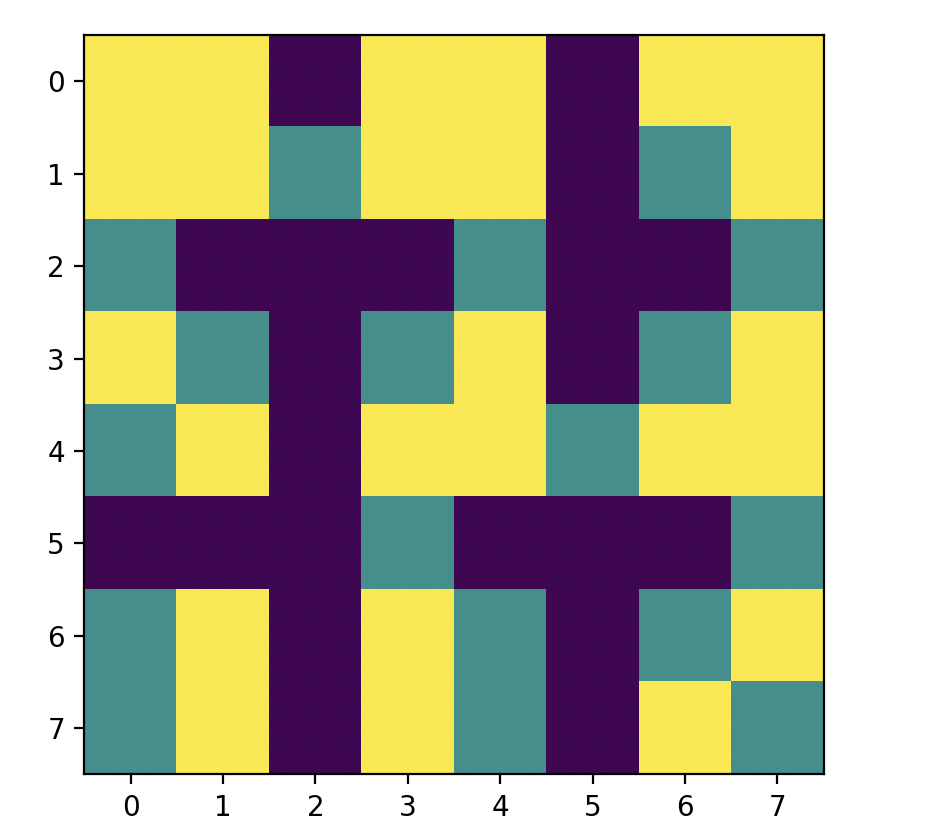}
 \caption{During training we intently don’t allow the agent to terminate in any state with green color}
 \end{subfigure}
 \caption{Illustration of the process of hiding states in order to measure generalization}
 \label{fig:hiding_states}
\end{figure}

\subsection{How we measure generalization}
Let $P^T_{\theta}$ be the learned terminal\footnote{A terminal probability is given by: $$\forall x \in \mathcal{X}: P^{T}_{\theta}(x) =\sum_{\tau \in \mathcal{T}: x \in \tau} \prod_{i=0}^{n-1} \hat{P}_F\left(s_{i+1} \mid s_i ; \theta\right).$$ It can be computed recurvisely using equation (33) in \cite{malkin2023gflownets}.} distribution and $P_{*}^{T} := \frac{R}{Z}$ the normalized reward (true distribution). The Jensen–Shannon divergence
is defined by:
$$D_{\text{JS}}(P^T_{\theta} \parallel P_{*}^{T}) = \frac{1}{2} \left( D_{\text{KL}}(P^T_{\theta} \parallel M) + D_{\text{KL}}(P_{*}^{T} \parallel M) \right),$$
where
$$M = \frac{1}{2}(P^T_{\theta} + P_{*}^{T}).$$
This divergence serves as our loss function, capturing the dissimilarity between the learned terminal distribution and the target normalized reward distribution $P_{*}^{T}$. To assess the learning dynamic of each algorithm, we track the Jensen-Shannon Divergence (JSD) loss at each training iteration. The curves labeled FL-DB, DB, and TB illustrate the JSD loss, while hiding pre-selected random states. The curves FL-D$B_{whm}$, D$B_{whm}$, and T$B_{whm}$ demonstrate the JSD loss progression during training iterations where state suppression was not employed.

\begin{figure}[!htbp]
\centering
% First row with 3 images
\begin{minipage}{\textwidth}
\centering
 \begin{subfigure}[t]{0.48\textwidth}
 \centering
 \includegraphics[width=\textwidth]{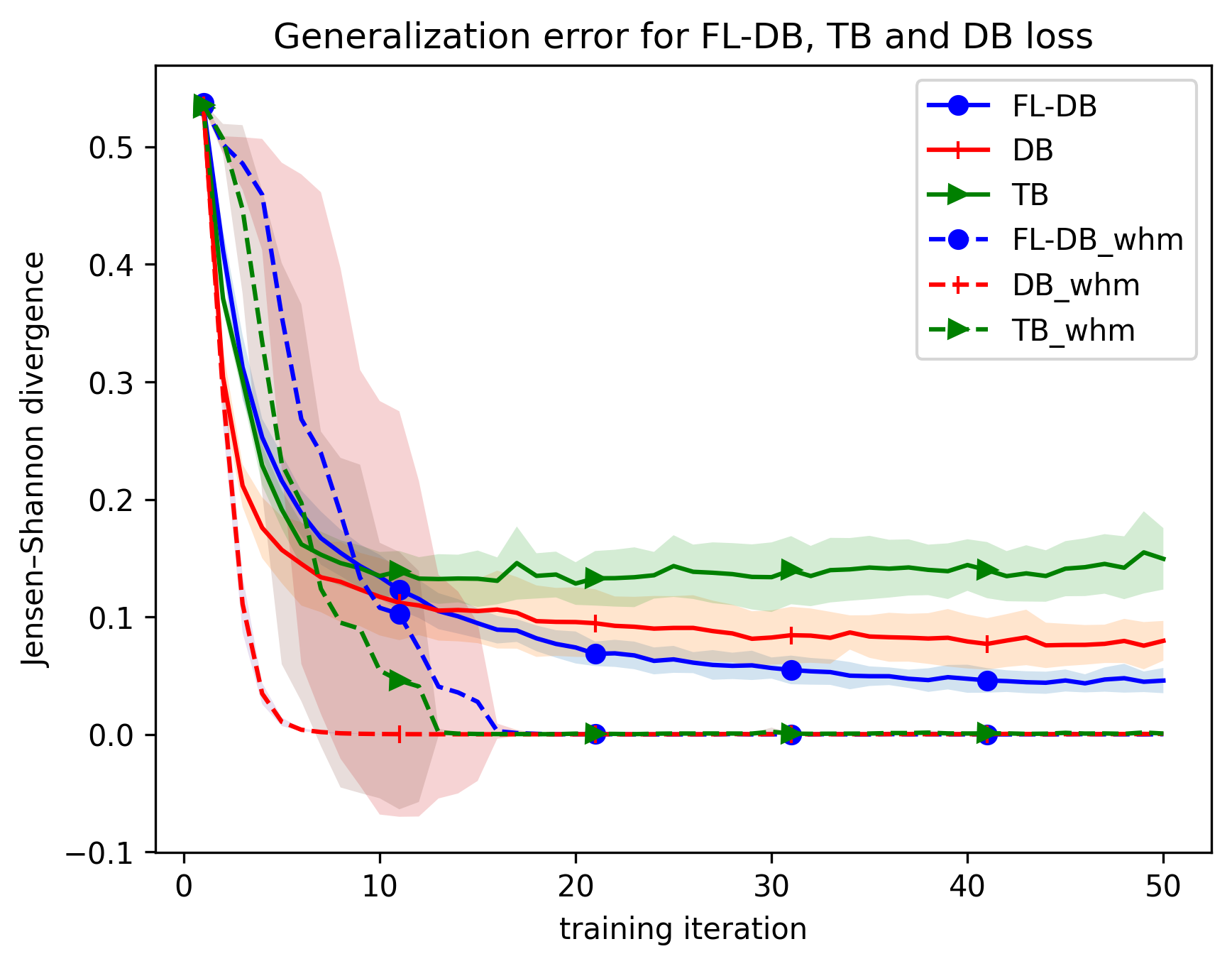}
 \caption{The grid is of size 20 by 20 and we hide 302 randomly chosen states}
 \label{fig:image4}
 \end{subfigure}
 % Space between the two figures in the second row
 \hfill
 \begin{subfigure}[t]{0.48\textwidth}
 \centering
 \includegraphics[width=\textwidth]{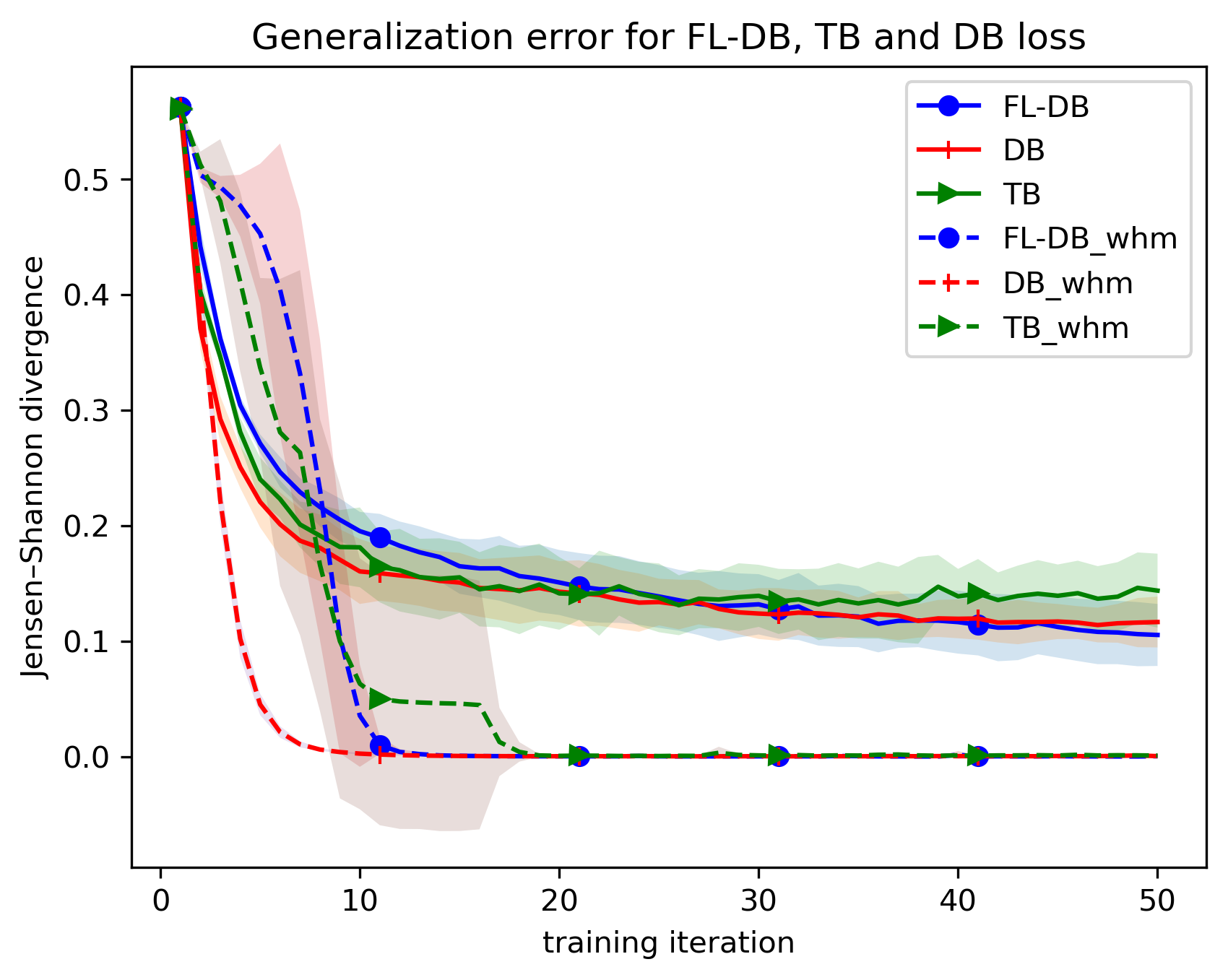}
 \caption{The grid is of size 30 by 30 and we hide 675 randomly chosen states}
 \label{fig:image5}
 \end{subfigure}
 \hfill
 \begin{subfigure}[t]{0.48\textwidth}
 \centering
 \includegraphics[width=\textwidth]{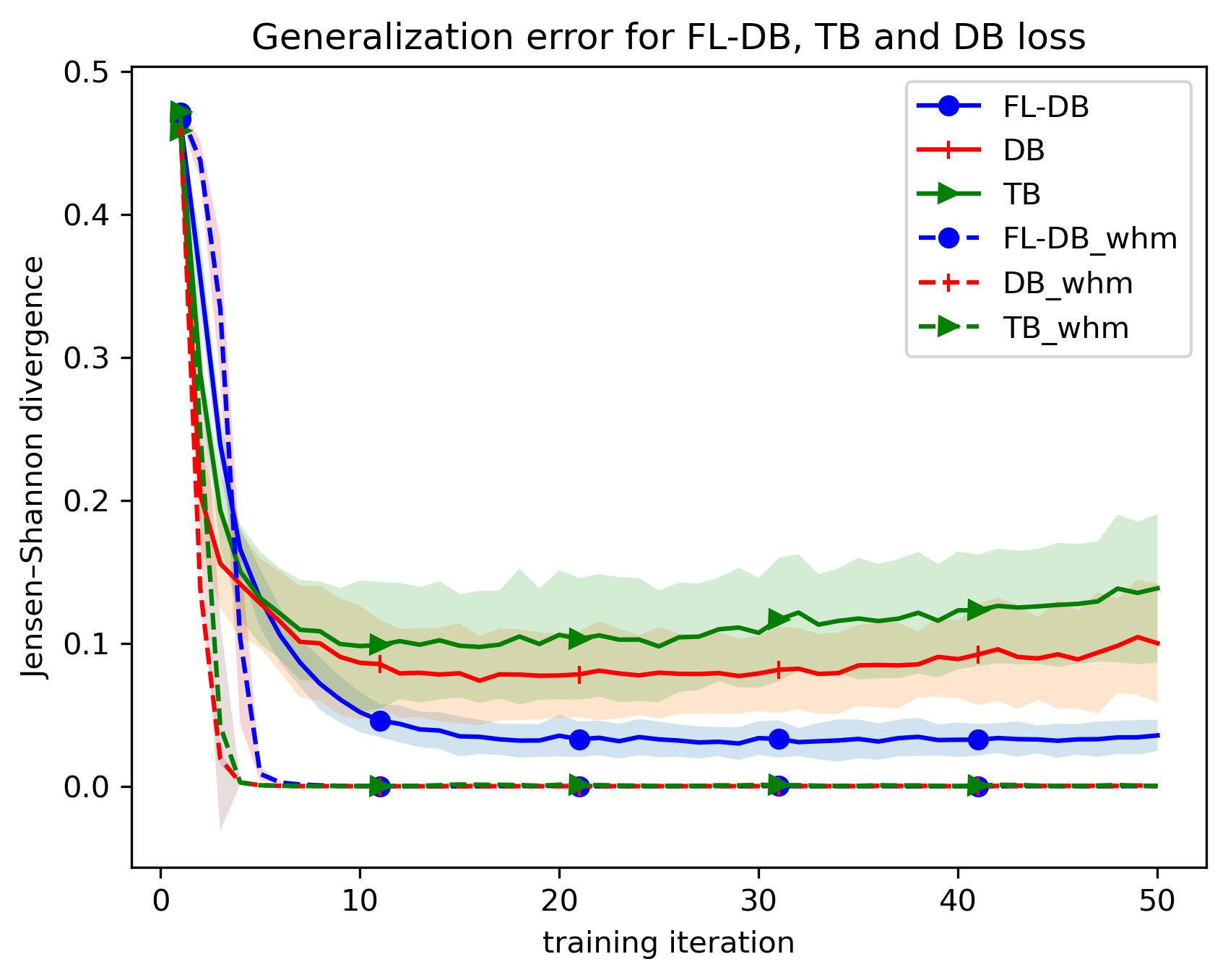}
 \caption{The grid is of size 12 by 12 and we hide 108 randomly chosen states}
 \label{fig:image5-2}
 \end{subfigure}
\end{minipage}
\vspace{1em} % Add some vertical space between the rows

\begin{minipage}{\textwidth}
 \centering
 \begin{subfigure}[t]{0.48\textwidth}
 \centering
 \includegraphics[width=\textwidth]{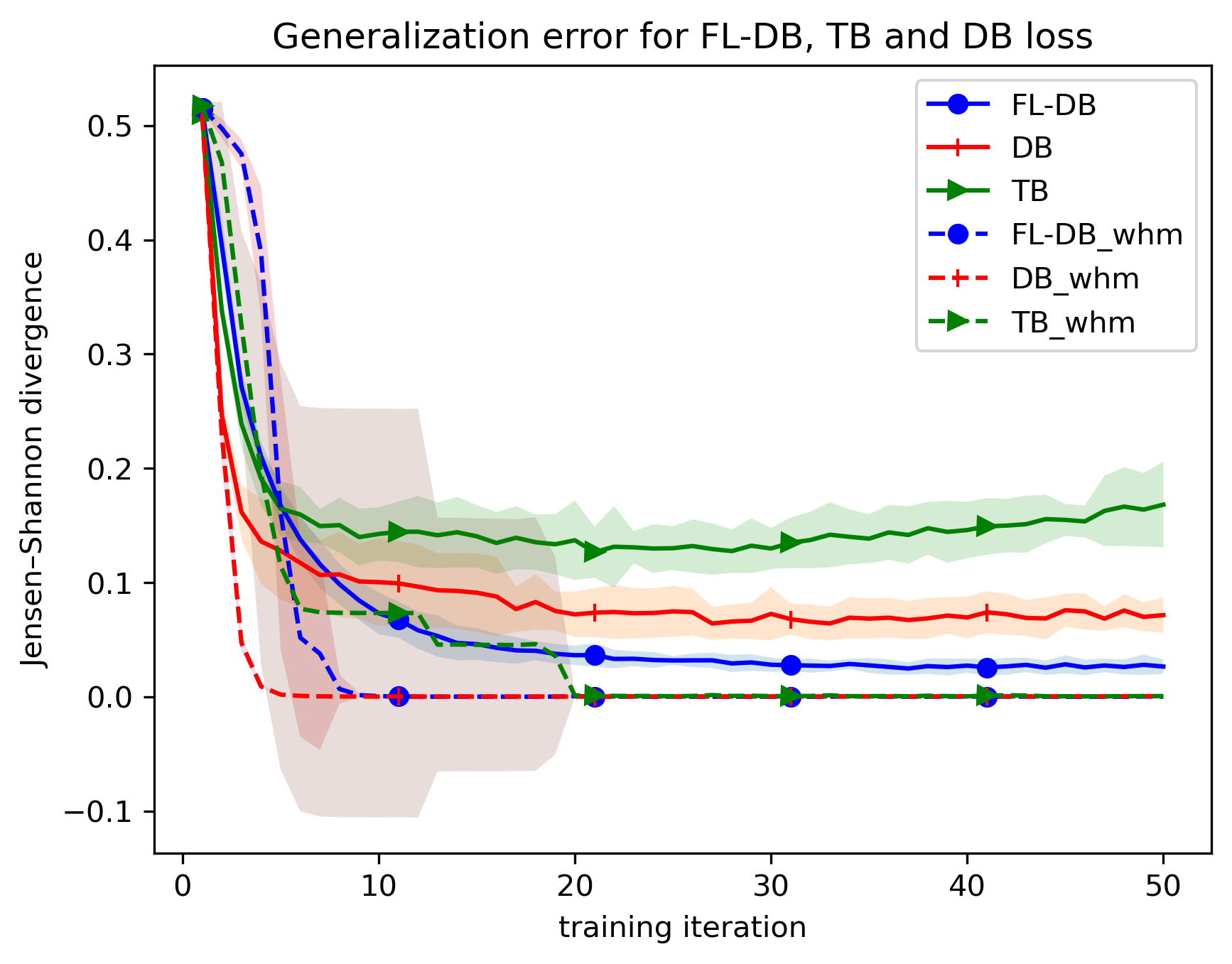}
 \caption{The grid is of size 15 by 15 and we hide 170 randomly chosen states}
 \label{fig:image4-2}
 \end{subfigure}
 % Space between the two figures in the second row
 \hfill
 \begin{subfigure}[t]{0.48\textwidth}
 \centering
 \includegraphics[width=\textwidth]{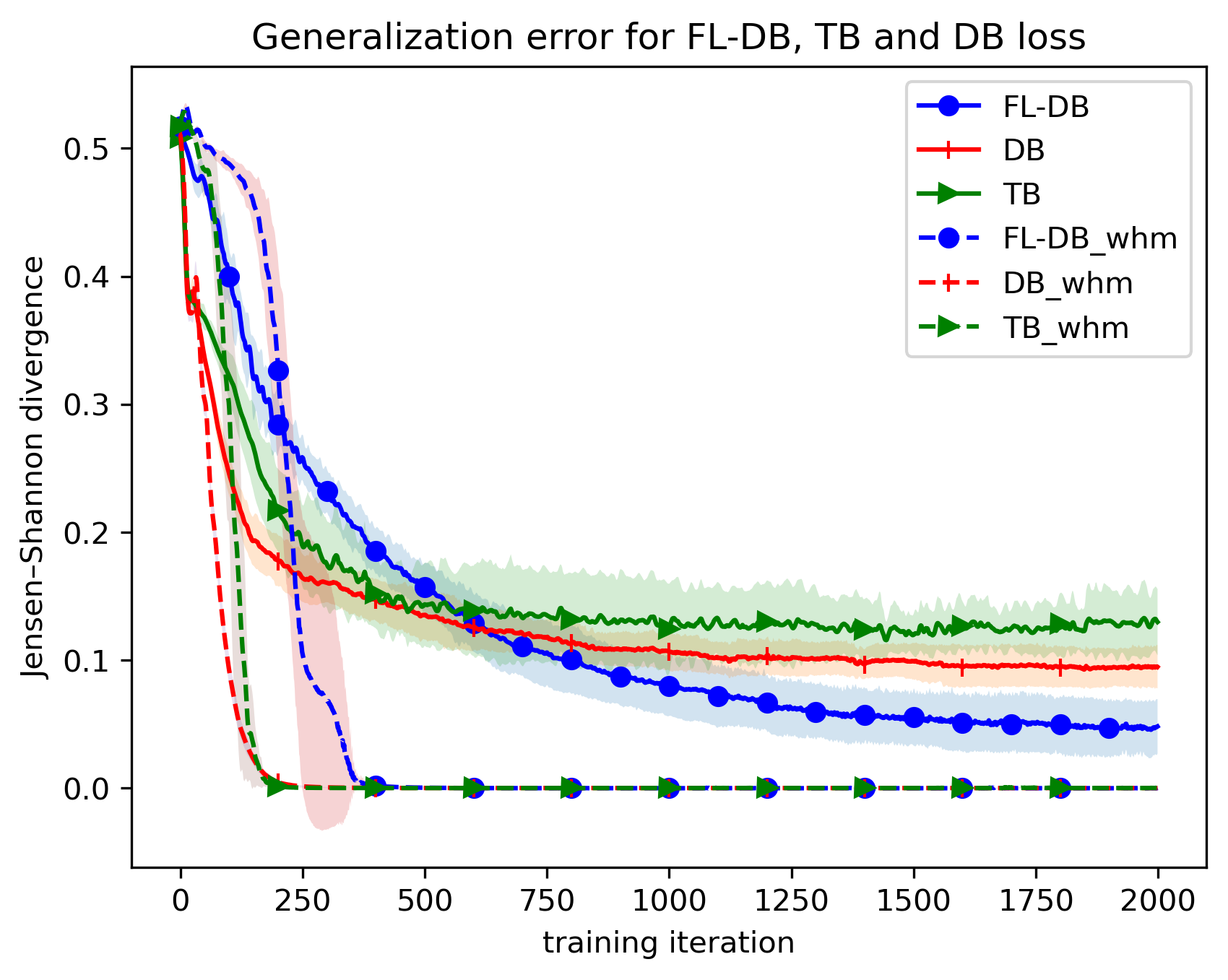}
 \caption{The grid is of size 8 by 8 and we hide 48 randomly chosen states}
 \label{fig:image5-3}
 
 \end{subfigure}
\end{minipage}

\caption{Tracking generalization of different losses and environments sizes.}
\label{fig:jsd}
\end{figure}

\subsection{Observation}
In our analysis, a discernible pattern emerged among the evaluated algorithms. Notably, the DB algorithm exhibited a superior capacity for generalization compared to the TB algorithm, as evidenced by the data presented in Figure~\ref{fig:jsd}. This trend suggests that the DB algorithm's proficiency in learning the state flow significantly contributes to its enhanced generalization abilities. It is important to note that the FL-DB variant, which has access to rewards for intermediate states (meaning the state flow is offset to the intermediate reward plus a learnable quantity)\footnote{$\log(F(s)) = \log(R(s)) + \log(\hat{F}(s,\theta))$} consistently outperformed other loss functions in our evaluations. However, it is crucial to recognize that access to intermediate rewards is not typically available in practical scenarios, thereby limiting the general applicability of the FL-DB approach in real-world settings.
\subsection{Examples of the learned distribution}
Presented in Figure~\ref{fig:ood} are the visual representations of the sampling distribution obtained from the trained GFlowNet which was prevented from visiting the modes below the anti-diagonal in the grid environment. 
These figures show the exact learned distribution over terminal states after training. 
\begin{figure}[!htbp] % "Here" preference, also "top", "bottom", "page" preferences
 \centering
 
 % First image on the left
 \begin{subfigure}[t]{0.48\textwidth}
 \centering
 \includegraphics[width=\textwidth]{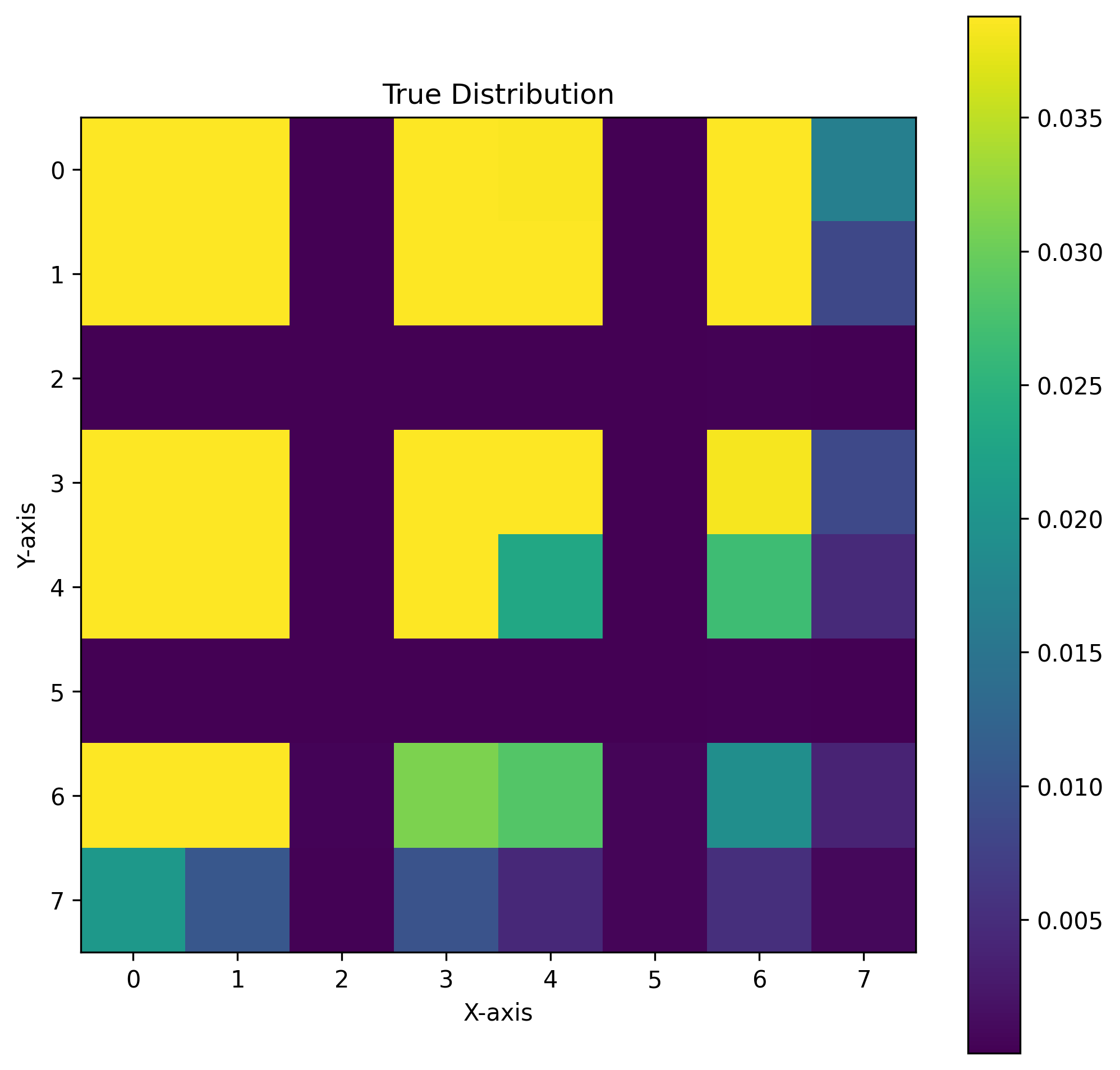}
 \caption{During training we intently don’t allow the agent to terminate in any state that has length more than 7 using DB loss. This image is the learned distribution after training}
 \label{fig:tb2}
 \end{subfigure}
 \hfill % Space between the two figures
 % Second image on the right
 \begin{subfigure}[t]{0.48\textwidth}
 \centering
 \includegraphics[width=\textwidth]{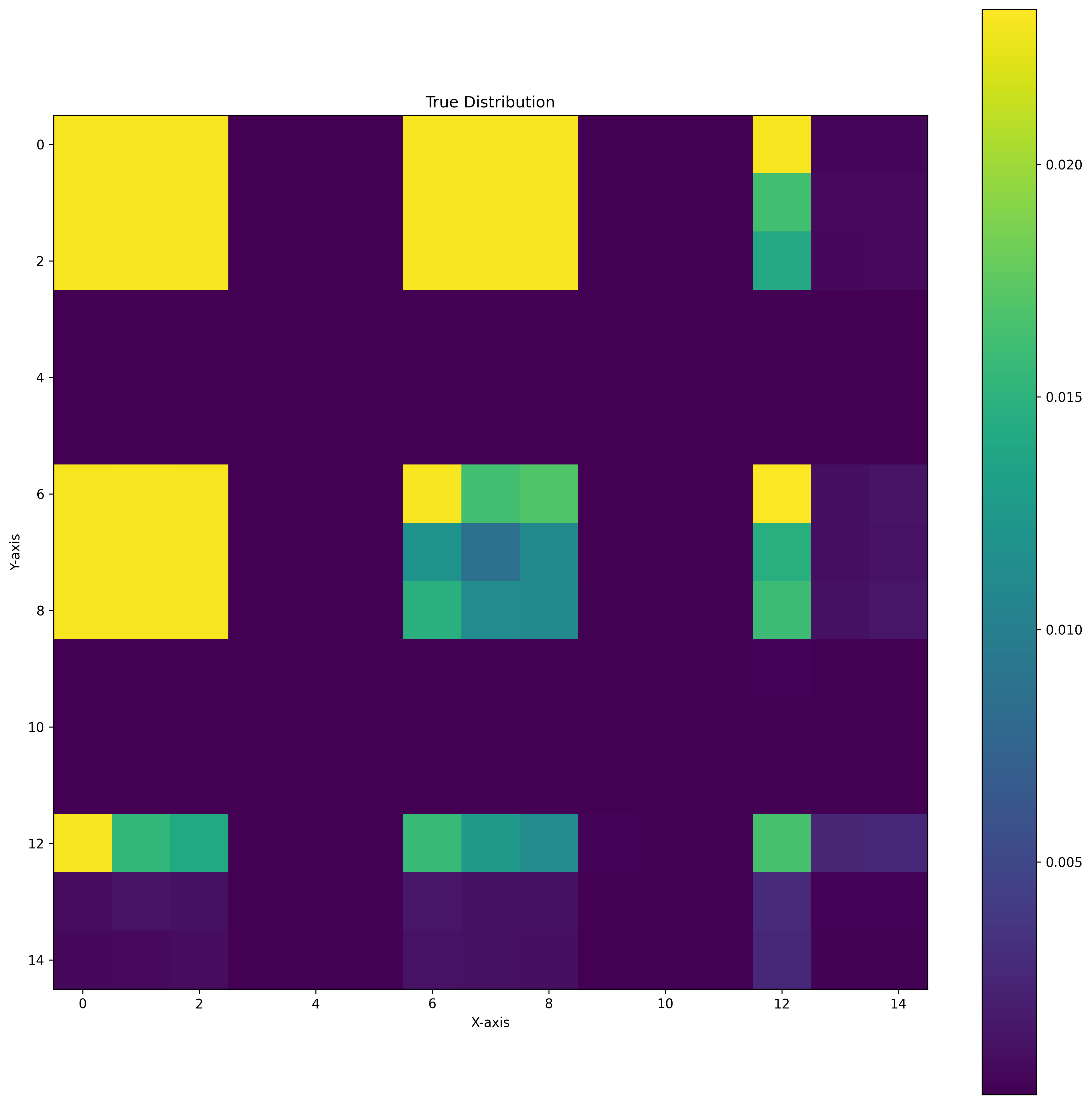}
 \caption{During training we intently don’t allow the agent to terminate in any state that has length more than 12 using DB loss. This image is the learned distribution after training}
 \label{fig:copy5-2}
 \end{subfigure}
 \caption{Showing the challenge of out-of-distribution generalization}
 \label{fig:ood}
\end{figure}

\subsection{Limits and future work}
Figure~\ref{fig:tb2} shows a capacity to reconstruct all the unseen anti-diagonal and perfectly localizing the modes in the right places. However in Figure~\ref{fig:copy5-2} the learned distribution demonstrates a weaker ability to generalize to states having far representation to the training states (specifically, states having in their coordinates the values 14 or 15). This limitation may be attributable to the requirement for a degree of out-of-distribution generalization \cite{arjovsky2019invariant,bengio2011deep,haley1992extrapolation,krueger2021ood,balestriero2021learning,barrett2018measuring,zhan2022evaluating} capacity to generalize to states sampled from a distribution different from the training one \cite{liu2023towards,ganin2016domain,rosca2019understanding,krueger2021ood}—a capacity that current learning algorithms manifest partially. Further experimental investigations are warranted to substantiate the hypothesis that Detailed Balance (DB) contributes to enhanced generalization capabilities. To robustly validate this hypothesis, it is proposed that these experiments be conducted within a range of more complex and challenging environments. The stability section may be extended by a theorem for non-uniform $P_B$.

\bibliographystyle{splncs04}
\bibliography{biblio}

\begin{thebibliography}{10}
\providecommand{\url}[1]{\texttt{#1}}
\providecommand{\urlprefix}{URL }
\providecommand{\doi}[1]{https://doi.org/#1}

\bibitem{arjovsky2019invariant}
Arjovsky, M., Bottou, L., Gulrajani, I., Lopez-Paz, D.: Invariant risk minimization. arXiv preprint arXiv:1907.02893  (2019)

\bibitem{bach}
Bach, F.: Learning Theory from First Principles. \url{https://www.di.ens.fr/~fbach/ltfp_book.pdf}, accessed: 2024-03-10

\bibitem{balestriero2021learning}
Balestriero, R., Pesenti, J., LeCun, Y.: Learning in high dimension always amounts to extrapolation. arXiv preprint arXiv:2110.09485  (2021)

\bibitem{barrett2018measuring}
Barrett, D.G.T., Hill, F., Santoro, A., Morcos, A.S., Lillicrap, T.P.: Measuring abstract reasoning in neural networks. International Conference on Machine Learning  (2018)

\bibitem{bendavid2006sober}
Ben-David, S., von Luxburg, U., Pal, D.: A sober look at clustering stability. Conference on Learning Theory (COLT)  (2006)

\bibitem{bengio2021flow}
Bengio, E., Jain, M., Korablyov, M., Precup, D., Bengio, Y.: Flow network based generative models for non-iterative diverse candidate generation. Neural Information Processing Systems (NeurIPS)  (2021)

\bibitem{bengio2011deep}
Bengio, Y., Bastien, F., Bergeron, A., Boulanger–Lewandowski, N., Breuel, T., Chherawala, Y., Cisse, M., Côté, M., Erhan, D., Eustache, J., Glorot, X., Muller, X., Pannetier~Lebeuf, S., Pascanu, R., Rifai, S., Savard, F., Sicard, G.: Deep learners benefit more from out-of-distribution examples. Artificial Intelligence and Statistics (AISTATS)  (2011)

\bibitem{bengio2023foundations}
Bengio, Y., Lahlou, S., Deleu, T., Hu, E.J., Tiwari, M., Bengio, E.: {GFlowNet} foundations. Journal of Machine Learning Research (24),  1--76 (2023)

\bibitem{bertsekas1996neurodynamic}
Bertsekas, D.P., Tsitsiklis, J.N.: Neuro-Dynamic Programming. Athena Scientific (1996)

\bibitem{bousquet2003introduction}
Bousquet, O., Boucheron, S., Lugosi, G., Luxburg, U., Rätsch, G.: Introduction to statistical learning theory. Advanced Lectures on Machine Learning, 169-207 (2004)  (01 2003)

\bibitem{bousquet2002stability}
Bousquet, O., Elisseeff, A.: Stability and generalization. Journal of Machine Learning Research  \textbf{2}(Dec),  499--526 (2002)

\bibitem{deleu2024discrete}
Deleu, T., Nouri, P., Malkin, N., Precup, D., Bengio, Y.: Discrete probabilistic inference as control in multi-path environments. Uncertainty in Artificial Intelligence (UAI)  (2024)

\bibitem{feldman2018generalization}
Feldman, V., Vondr{\'{a}}k, J.: Generalization bounds for uniformly stable algorithms. Neural Information Processing Systems (NeurIPS)  (2018)

\bibitem{ganin2016domain}
Ganin, Y., Ustinova, E., Ajakan, H., Germain, P., Larochelle, H., Laviolette, F., Marchand, M., Lempitsky, V.: Domain-adversarial training of neural networks. Journal of Machine Learning Research  \textbf{17}(1),  2096--2030 (2016)

\bibitem{haley1992extrapolation}
Haley, P., Soloway, D.: Extrapolation limitations of multilayer feedforward neural networks. International Joint Conference on Neural Networks (IJCNN)  (1992)

\bibitem{kearns1999algorithmic}
Kearns, M., Ron, D.: Algorithmic stability and generalization performance. Neural Information Processing Systems (NeurIPS)  (1999)

\bibitem{krueger2021ood}
Krueger, D., Caballero, E., Jacobsen, J.H., Zhang, A., Binas, J., Zhang, D., Priol, R.L., Courville, A.: Out-of-distribution generalization via risk extrapolation (rex). International Conference on Machine Learning (ICML)  (2021)

\bibitem{kuznetsov2016generalization}
Kuznetsov, V., Mohri, M.: Generalization bounds for non-stationary mixing processes. Machine Learning  \textbf{106},  93--117 (2016)

\bibitem{liu2023towards}
Liu, J., Shen, Z., He, Y., Zhang, X., Xu, R., Yu, H., Cui, P.: Towards out-of-distribution generalization: A survey. arXiv preprint arXiv:2108.13624  (2023)

\bibitem{madan2023learning}
Madan, K., Rector-Brooks, J., Korablyov, M., Bengio, E., Jain, M., Nica, A., Bosc, T., Bengio, Y., Malkin, N.: Learning {GFlowNets} from partial episodes for improved convergence and stability. International Conference on Machine Learning (ICML)  (2023)

\bibitem{malkin2022trajectory}
Malkin, N., Jain, M., Bengio, E., Sun, C., Bengio, Y.: Trajectory balance: Improved credit assignment in {GFlowNets}. Neural Information Processing Systems (NeurIPS)  (2022)

\bibitem{malkin2023gflownets}
Malkin, N., Lahlou, S., Deleu, T., Ji, X., Hu, E., Everett, K., Zhang, D., Bengio, Y.: {GFlowNets} and variational inference. International Conference on Learning Representations (ICLR)  (2023)

\bibitem{mohri2012foundations}
Mohri, M., Rostamizadeh, A., Talwalkar, A.: Foundations of Machine Learning. The MIT Press (2012)

\bibitem{pan2023better}
Pan, L., Malkin, N., Zhang, D., Bengio, Y.: Better training of {GFlowNets} with local credit and incomplete trajectories (2023)

\bibitem{rosca2019understanding}
Rosca, M., Lakshminarayanan, B., Warde-Farley, D., Mohamed, S.: Understanding and improving interpolation in autoencoders via an adversarial regularizer. International Conference on Learning Representations (ICLR)  (2019)

\bibitem{shalevshwartz2014understanding}
Shalev-Shwartz, S., Ben-David, S.: Understanding Machine Learning: From Theory to Algorithms. Cambridge University Press (2014)

\bibitem{sutton1988learning}
Sutton, R.S.: Learning to predict by the methods of temporal differences. Machine learning  \textbf{3}(1),  9--44 (1988)

\bibitem{sutton1998reinforcement}
Sutton, R.S., Barto, A.G.: Reinforcement Learning: An Introduction. MIT press (1998)

\bibitem{tiapkin2024generative}
Tiapkin, D., Morozov, N., Naumov, A., Vetrov, D.: Generative flow networks as entropy-regularized {RL}. Artificial Intelligence and Statistics (AISTATS)  (2024)

\bibitem{valiant1984theory}
Valiant, L.G.: A theory of the learnable. Communications of the ACM  \textbf{27}(11),  1134--1142 (1984)

\bibitem{vapnik1995nature}
Vapnik, V.N.: The Nature of Statistical Learning Theory. Springer-Verlag (1995)

\bibitem{wu2006learning}
Wu, Q., Ying, Y., Zhou, D.X.: Learning rates of least-square regularized regression. Foundations of Computational Mathematics  \textbf{6}(2),  171--192 (2006)

\bibitem{zhan2022evaluating}
Zhan, J., Xie, X., Mao, J., Liu, Y., Guo, J., Zhang, M., Ma, S.: Evaluating interpolation and extrapolation performance of neural retrieval models. Conference on Information and Knowledge Management (CIKM)  (2022)

\end{thebibliography}
\appendix

\section{Appendix}

\begin{lemma}
\label{lma:tv_bound}
Let Q and P be two probability measures on $(\Omega,\mathcal{F})$, let $h:\Omega \longmapsto \mathbb{R}$ be a Borel-measurable function such that $-M_1 \le h \le M_2$ for some $M_1,M_2\ge0$ Then:
$$ |\mathbb{E}_{Q}[h] - \mathbb{E}_{P}[h] | \le (M_1 + M_2)||P-Q||_{TV}. $$
\end{lemma}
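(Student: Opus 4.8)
The plan is to reduce the statement to the $L^1$ characterization of total variation distance through an elementary centering argument. First I would fix a common dominating measure, e.g.\ $\mu = P + Q$, and write $p = dP/d\mu$ and $q = dQ/d\mu$ for the respective densities, so that $\mathbb{E}_Q[h] - \mathbb{E}_P[h] = \int_\Omega h\,(q - p)\,d\mu$ and $\|P - Q\|_{TV} = \tfrac{1}{2}\int_\Omega |q - p|\,d\mu$ (here adopting the convention $\|P-Q\|_{TV} = \sup_{A\in\mathcal{F}}|P(A)-Q(A)|$, which coincides with the half-$L^1$ distance of the densities).

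The key step is to center $h$. Because $P$ and $Q$ are both probability measures, $\int_\Omega (q - p)\,d\mu = 0$, so subtracting any constant $c$ from $h$ leaves the difference of expectations unchanged. I would choose $c = (M_2 - M_1)/2$ and set $\tilde h := h - c$; from $-M_1 \le h \le M_2$ one gets $|\tilde h| \le (M_1 + M_2)/2$ pointwise. Then the chain $|\mathbb{E}_Q[h] - \mathbb{E}_P[h]| = \bigl|\int_\Omega \tilde h\,(q-p)\,d\mu\bigr| \le \tfrac{M_1+M_2}{2}\int_\Omega |q - p|\,d\mu = (M_1 + M_2)\,\|P - Q\|_{TV}$ closes the argument.

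An equivalent route avoids densities altogether by using the Hahn--Jordan decomposition: split $\Omega$ into $\Omega^+ = \{q \ge p\}$ and $\Omega^- = \{q < p\}$, bound $h \le M_2$ on $\Omega^+$ (where $q - p \ge 0$) and $h \ge -M_1$ on $\Omega^-$ (where $q - p < 0$), and use $\int_{\Omega^+}(q-p)\,d\mu = -\int_{\Omega^-}(q-p)\,d\mu = \|P-Q\|_{TV}$ to recover the same constant; symmetry in $P,Q$ (or replacing $h$ by $-h$) then removes the absolute value.

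The argument is short, so there is no serious obstacle; the only point requiring care is the bookkeeping around the total-variation convention, since an unnoticed factor of $2$ would turn the claimed constant $M_1 + M_2$ into $2(M_1+M_2)$ or $(M_1+M_2)/2$. The recentering choice $c = (M_2 - M_1)/2$ is precisely what makes the constant exactly $M_1 + M_2$ rather than the cruder $2\max(M_1, M_2)$ one obtains without it, so I would state that step explicitly.
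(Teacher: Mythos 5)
Your argument is correct. The paper itself does not supply a proof of this lemma --- it only points to an external reference (Kuznetsov--Mohri), so there is no in-paper argument to compare against; your proposal fills that gap with a complete, self-contained derivation. The centering step is exactly right: since $P$ and $Q$ are both probability measures, $\int (q-p)\,d\mu = 0$, so replacing $h$ by $h - (M_2-M_1)/2$ costs nothing and yields the pointwise bound $|\tilde h| \le (M_1+M_2)/2$, which combined with $\|P-Q\|_{TV} = \tfrac12\int|q-p|\,d\mu$ gives precisely the constant $M_1+M_2$. Your attention to the total-variation normalization is warranted --- the factor of $2$ is the only place such a proof typically goes wrong, and you handle it correctly. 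The Hahn--Jordan variant you sketch is an equally valid alternative. One tiny remark: since $\mathcal{T}$ is finite in the paper's application, the dominating-measure machinery could be replaced by a plain sum over trajectories, but the measure-theoretic version is what the lemma as stated requires and your proof covers it.
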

\begin{proof}[see {\cite[p.~5]{kuznetsov2016generalization}}]
\end{proof}

\subsection{Stability in supervised learning} \label{subsec:statistical-summaries}
Let $\mathcal{X}$ be a set that represents the input space. Let $\mathcal{Y}$ be a set that represent the target space. $\mathcal{H}$ is a set of functions from $\mathcal{X}$ to $\mathcal{Y}$. Each function $h \in \mathcal{H}$ is called a hypothesis. A learning algorithm is a mapping of the form $l_A:\bigcup_{n \in \mathbb{N}}(\mathcal{X}\times\mathcal{Y})^{n} \longmapsto \mathcal{H}$. Some learning algorithms may output different hypotheses for the same data set, this randomness may be modeled by an additional input set $\Omega$ provided with a probability measure $P$. We limit ourselves to deterministic learning algorithms. A loss function is any mapping of the form $L: \mathcal{X}\times\mathcal{Y} \to \mathbb{R}^+$.
\begin{definition}[Uniform stability]
Let $S_1$ and $S_2$ be two elements of S that differ with a single point. $l_A(S_1)$ and $l_A(S_2)$ the two learnt hypotheses by the learning algorithm $l_A$. We say that $l_A$ is a $\beta$-uniformly stable learning algorithm if $ \beta$ is the smallest constant that satisfies:

$$ \forall (x,y) \in \mathcal{X}\times\mathcal{Y}, |L(l_A(S_1)(x), y) - L(l_A(S_2)(x), y) |< \beta.$$
\end{definition}

\subsection{Remark}

To see the similarity between stability in supervised learning and the one for GFlowNets, $S_1$ and $S_2$ are transformed to $R_1$ and $R_2$. 
$L(l_A(S_1)(x),y)$ and $L(l_A(S_2)(x),y)$ are replaced respectively by $P_1$ and $P_2$, probability distributions over trajectories given by a GFlowNet
that reach a global minimum using a Trajectory Balance loss and fix the backward
transition probability to a uniform distribution, trained respectively on $R_1$ and $R_2$.

\end{document}